\newtheorem{proposition}{Proposition}[section]
\newtheorem{definition}{Definition}[section]
\renewcommand{\thecorollary}
\newcommand{\reffig}[1]{Fig.~\ref{#1}}
\newcommand{\reftab}[1]{Table~\ref{#1}}
\newcommand{\refsec}[1]{Section~\ref{#1}}
\newcommand{\refequ}[1]{Eq.~\eqref{#1}}
\DeclareAcronym{sUAS}{
  short = sUAS,
  long  = small uncrewed aerial system,
  short-indefinite = an,
  long-indefinite = a
}
\DeclareAcronym{UAV}{
  short = UAV,
  long  = uncrewed aerial vehicle,
  short-indefinite = a,
  long-indefinite = a
}
\DeclareAcronym{DEM}{
  short = DEM,
  long  = digital elevation map
}
\DeclareAcronym{ICS}{
  short = ICS,
  long  = inevitable collision state,
  short-indefinite = an,
  long-indefinite = an
}
\title{\LARGE \bf
Safe Periodic Trochoidal Paths for Fixed-Wing UAVs\\in Confined Windy Environments
}
\author{Jaeyoung Lim$^{1}$, David Rohr$^{1}$, Thomas Stastny$^{1}$, Roland Siegwart$^{1}$
\thanks{$^1$Autonomous Systems Lab, ETH Z\"urich, Switzerland. {\tt \footnotesize \{jalim, drohr, tstastny, rsiegwart\}@ethz.ch}}%
\thanks{This work was supported by InnoSuisse Innovation Project 17.867 IP-ICT in collaboration with Dufour Aerospace AG.}%
}
\begin{document}

\maketitle
\thispagestyle{empty}
\pagestyle{empty}

\begin{abstract}
Due to their energy-efficient flight characteristics, fixed-wing type \acp{UAV} are useful robotic tools for long-range and duration flight applications in large-scale environments.
However, flying fixed-wing \ac{UAV} in confined environments, such as mountainous regions, can be challenging due to their limited maneuverability and sensitivity to uncertain wind conditions. 
%
In this work, we first analyze periodic trochoidal paths that can be used to define wind-aware terminal loitering states.
We then propose a wind-invariant safe set of trochoidal paths along with a switching strategy for selecting the corresponding minimum-extent periodic path type.
Finally, we show that planning with this minimum-extent set allows us to safely reach up to 10 times more locations in mountainous terrain compared to planning with a single, conservative loitering maneuver.

\end{abstract}

\section{INTRODUCTION}

\Acfp{UAV} have become crucial tools for information-gathering applications, such as surveying and inspection~\cite{bircher2016threedimensional}, search and rescue~\cite{oettershagen2018robotic}, and environment monitoring~\cite{jouvet2019high, buhler2017photogrammetric}. For large-scale coverage or long-range applications, fixed-wing type \acp{UAV} are preferred over rotary-wing type systems due to their high endurance and speed. While the wing-borne aerodynamic lift enables energy-efficient flight, it also poses challenges for operating safely. Fixed-wing \ac{UAV} need to maintain a minimum air-relative velocity (airspeed), making them non-holonomic and coupled with the wind. This can become especially critical in, e.g., mountainous terrain where the inability to stop and the high uncertainty associated with wind greatly increases the risk of collisions. 

Planning safe paths for fixed-wing vehicles in confined environments has been demonstrated in tight indoor spaces~\cite{bry2015aggressive} and steep, mountainous terrain~\cite{oettershagen2017towards, lim2024safe}.
An important property of evaluating safe planning is to ensure infinite-horizon collision checks.
\cite{lim2024safe} approximates infinite horizon collision checks by checking whether a circular path exists that satisfies the terrain (or airspace) constraints. Circular paths are natural periodic paths in zero wind conditions, representing a fixed-wing \ac{UAV} maintaining constant speed and roll angle. As the path is periodic, evaluation of the path for a single period can be applied repeatedly for the consecutive periods. This is then used as an efficient way to evaluate the safety of the terminal state for the path planning problem. However, in wind, a loitering fixed-wing \ac{UAV} must continuously adapt its roll angle to stay on track. Depending on the magnitude of the wind and radius of the circle, tracking may even become infeasible.

Wind-aware planning for fixed-wing vehicle has been considered for uniform~\cite{mcgee2005optimal, mcgee2007optimal, moon2023time, techy2009minimum, techy2010planar, bucher2023robust} and non-uniform wind cases~\cite{oettershagen2017towards, schopferer2018path, duan2024energy}. While these methods can correctly consider wind with air-relative curvature constraints, they do not evaluate the safety in the planning problem. 
Moreover, wind direction and intensity can change unpredictably, and especially for long endurance mission planning, there is considerable uncertainty about wind conditions to be encountered.
The prior approaches are not robust against changes in wind conditions, exposing a mission liability. 
Therefore, there is a need to be able to evaluate terminal safe conditions robust to wind uncertainty. 


\begin{figure}[t]
\centerline{\includegraphics[width=\linewidth]{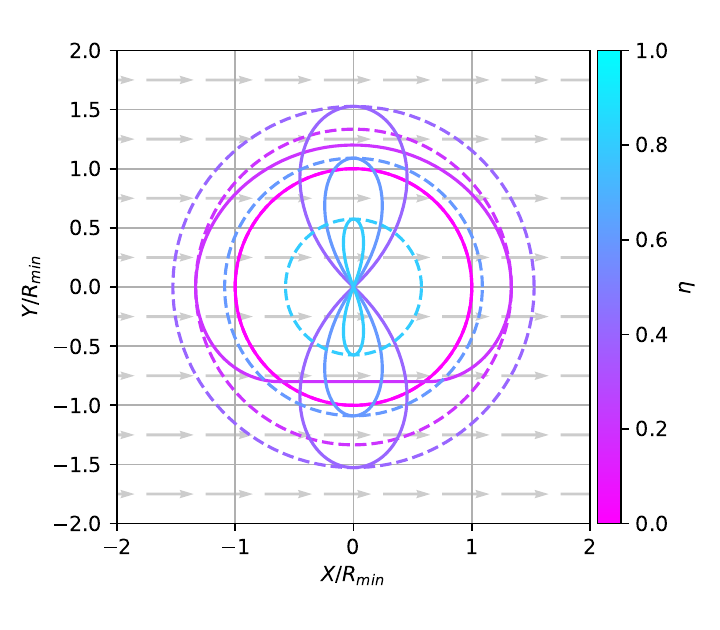}}
\caption{Overview of the safe valid set and example of periodic paths given the wind direction. Our approach defines a wind invariant safe periodic set which allow the vehicle to stay within the region given any wind speed and wind direction.}
\label{fig:combined_maximum_extent}
\end{figure}

In this work, we extend the safety evaluation using periodic paths in~\cite{lim2024safe} to consider wind.
We define the minimum-extent wind-invariant safe set as the region where a periodic trochoidal path always exists, regardless of wind conditions.
This is done by identifying key patterns of periodic trochoidal paths and finding the minimum radius that contains at least one periodic path for every wind condition.
We show that by considering multiple periodic path types simultaneously, we can reduce the required radius by \qty{20}{\%}, enlarging the operational space of fixed-wing vehicles to safely reach more diverse regions in steep mountainous terrain.

Our work enables the identification of regions above the terrain where a safe trochoidal path for all wind conditions may be determined. The key contributions of this paper are:
\begin{itemize}
    \item Identification of periodic trochoidal paths for generating kinematically feasible periodic paths in wind.
    \item Formulation of a minimum-extent wind-invariant safe trochoidal path set.
\end{itemize}

\section{Related Work}
\subsection{Safe Planning}
Evaluating the safety of systems with dynamic constraints requires evaluating \ac{ICS}~\cite{fraichard2004inevitable}.
\ac{ICS} denote regions of the state space where, once entered, it is not possible to avoid a collision with feasible inputs.
Evaluating \ac{ICS} can be impractical, as infinite horizon collision checks need to be made.
Therefore, practical systems use emergency maneuvers that contain a final stop condition~\cite{tordesillas2019faster}.
However, as fixed-wing aerial vehicles are unable to stop mid-air, it is challenging to design emergency maneuvers in confined spaces.
Another way to approximate \ac{ICS} is to use periodic paths~\cite{bekris2010avoiding}.
Circular periodic paths have been used for evaluating safety~\cite{lim2024safe, arora2015emergency}.
In this work, we explore extending circular periodic paths with trochoidal path representations to account for the presence of wind.

\subsection{Wind Aware Navigation}

Numerous approaches assume that the wind field is uniform to simplify the planning problem.
Trochoidal paths have shown to be an effective way to represent the non-holomonic constraints of a Dubins vehicle~\cite{dubins1957curves} operating in wind~\cite{mcgee2005optimal, mcgee2007optimal,techy2009minimum, moon2023time, techy2010planar}.
While these approaches correctly consider the effect of wind for a curvature-constrained path in the air-relative frame, they assume that the wind field is known and uniform.

Wind can be highly non-uniform when affected by the topography of the terrain~\cite{achermann2024windseer}.
Non-uniform wind fields have been considered for planning by propagating an air-relative Dubins path through the wind field~\cite{oettershagen2017towards} or integrating the energy cost over a ground-relative Dubins airplane path~\cite{duan2024energy}.
However, acquiring accurate non-uniform wind field information can be challenging.
Moreover, it is challenging to evaluate the safety of the planned path due to the high uncertainty of the wind field.
In this work, we consider uniform wind fields, as the scale of periodic paths is small enough that a uniform wind assumption may hold.

Wind uncertainty for planning has been considered by predicting the wind field~\cite{achermann2019learning, luders2016wind}.
However, these approaches require accurate modeling of the wind field.
Wind uncertainties can also be handled robustly by calculating the conservative turn radius based on a worst-case scenario, usually occurring in a tailwind~\cite{schopferer2018path, wolek2015feasible}.
However, this may result in constraining the path to conservative curvature constraints, which may be prohibitive while operating in steep, mountainous environments.
In this work, we propose an efficient way to evaluate terminal safe conditions that are valid for all wind conditions.

\section{Preliminaries}
\subsection{Dubins curves}
A planar kinematic model of a fixed-wing vehicle in a wind-free environment can be written as the following, where $V_a$ is the airspeed, $R_{min}$ is the minimum turning radius, and $1/R_{min}$ is the maximum curvature.
\begin{align}
    \Dot{x} &= V_a \cos{\psi}\\
    \Dot{y} &= V_a \sin{\psi}\nonumber\\
    \Dot{\psi}& = u \quad s.t. \quad\|u\| \leq V_a/R_{min}\nonumber
\end{align}

Dubins~\cite{dubins1957curves} showed that the curvature-constrained shortest path between two states consists of a sequence of three segments, consisting of three segments of arcs or straight lines.
A straight segment (S) with an arc segment either a right(R) or left(L) turn creating a Dubins curve, there are six possible path types denoted as $\{RSR, LSL, RSL, LSR, RLR, LRL\}$~\cite{shkel2001classification, lim2023circling}.

\subsection{Trochoids}
A trochoid curve represents the movement of a Dubins vehicle under uniform wind.
We define the wind condition through the wind ratio $\eta = V_w/V_a \in [0, 1)$, and wind direction $\psi_w \in [0, 2\pi)$. Without loss of generality, we consider the kinematics in the trochoid frame, where the x-axis is aligned with the wind direction. Therefore, the kinematics can be written as~\refequ{eq:trochoid}, where $\phi_t = \psi-\psi_w$. Note that we only consider wind with $\eta < 1$ in this paper.
\begin{align}
    \Dot{x}_t &= V_a (\cos{\phi} + \eta)\label{eq:trochoid}\\
    \Dot{y}_t &= V_a \sin{\phi}\nonumber\\
    \Dot{\phi}_t& = u \quad s.t. \quad\|u\| \leq V_a/R_{min}\nonumber
\end{align}

A trochoid path can be written as~\refequ{eq:trochoid_path}, where $\omega = V_a/R_{min}$ is the maximum yaw rate, $\delta \in \{-1, 1\}$ the turn direction, and $\bm{x}_0$ the initial configuration of the vehicle at $t=0$.
\begin{align}
    x_{t} =& \frac{V_a}{\delta \omega} \sin(\delta \omega t+ \phi_{t_0}) + V_w t + x_{t_{0}}\label{eq:trochoid_path}\\
    y_{t} =& \frac{-V_a}{\delta \omega} \cos(\delta \omega t+ \phi_{t_0})+ y_{t_{0}}\nonumber\\
    \psi_{t} =& \delta \omega t + \psi_0\nonumber\\
    &x_{t_{0}} = x_0 - V_a/(\delta \omega)\sin(\phi_{t_0})\nonumber\\
    &y_{t_{0}} = y_0 + V_a/(\delta \omega)\cos(\phi_{t_0})\nonumber\\
    &\phi_{t_0} = \psi_0 - \psi_w\nonumber
\end{align}

Similar to Dubins, the time optimal path between the two states $\bm{x}_0 = (x_0, y_0, \psi_0)$, $\bm{x}_f = (x_f, y_f, \psi_f)$ also consists of six path types $\{RSR, LSL, RSL, LSR, RLR, LRL\}$~\cite{techy2009minimum}, where the right and left turn segment consists of trochoids.

\section{Periodic Trochoidal Paths}\label{sec:periodic_trochoid_paths}
A periodic path is obtained when the initial and final configurations of the path are identical, such that the path can be followed repeatedly. 
Therefore, we consider a periodic trochoidal path where $\bm{x}_0 = \bm{x}_f$. As the nonzero time-optimal path between two states would consist of six path types, we divide the characteristics of periodic paths into three cases.


\subsection{Periodic RSR/LSL Paths}\label{sec:rsr_lsl_periodic_path}
RSR/LSL-type trochoidal paths have a start and end trochoid segment that turns in the same direction with a straight segment in between. We parameterize the path by the first trochoid path duration $t_A$ starting at $\bm{x}_0$, second trochoid path time $t_B$ that arrives at $\bm{x}_f$ at $t=t_{2\pi}=2\pi/\omega$, and the slope of the line segment $\alpha$.

This type of trochoid path has been shown to have an analytic solution for $t_A, t_B, \alpha$~\cite{techy2009minimum}. The analytic solution can be computed as~\refequ{eq:bsb_analytical_solution}~\cite{techy2009minimum}. Note that there can be multiple solutions depending on the index $k$.
\begin{align}
    \alpha &= \tan^{-1}\left(\frac{y_{t_{20}} - y_{t_{10}}}{x_{t_{20}} - x_{t_{10}} + V_w \frac{\phi_{t_1} - \phi_{t_2} + 2k\pi}{\delta_2 \omega}}\right)\label{eq:bsb_analytical_solution}\\
    t_A &= \frac{t_{2\pi}}{\delta_1 2\pi} \left[ \sin^{-1}\left(\frac{V_w}{V_a}\sin(\alpha)\right) + \alpha - \phi_{t_1} \right]\nonumber\\
    t_B &= t_A + \frac{\phi_{t_1} - \phi_{t_2} + 2k\pi}{\delta_2 \omega} \quad k \in \{-3, -2, -1, 0, 1, 2\}\nonumber
\end{align}

\begin{figure}[t]\centerline{\includegraphics[width=\linewidth]{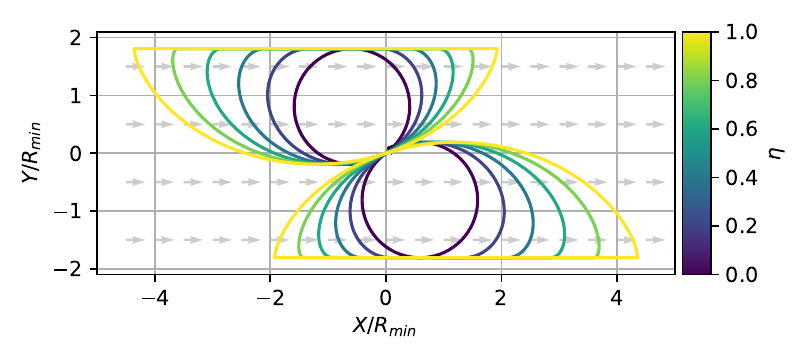}}
\caption{LSL/RSR periodic trochoidal paths with different wind ratios, tangent to the (arbitrary) initial state $\bm{x}_0 = (0.0, 0.0, 0.1\pi$). Wind direction is marked as grey arrows.}
\label{fig:rsr_periodic_path}
\end{figure}

For a periodic path, this means that the start and end arc segments are part of a single arc as ($\delta_1=\delta_2$), and therefore $\alpha = \tan^{-1}\left(0\right)=0$. Applying this simplifies \refequ{eq:bsb_analytical_solution} to \refequ{eq:bsb_periodic_solution}. Therefore, $t_A$ is simply the time it takes the system to turn to face directly upwind or downwind.
\begin{align}
    t_A &= \frac{t_{2\pi}}{\delta_1 2\pi} \left(\alpha-\phi_{t_1}\right)\label{eq:bsb_periodic_solution}\\
    t_B &= t_A + \frac{2k\pi}{\delta_2 \omega} + t_{2\pi} \quad k \in \{-3, -2, -1, 0, 1, 2\}\nonumber
\end{align}
 
\reffig{fig:rsr_periodic_path} shows the evolution of periodic paths with different wind ratios. As the wind speed increases, the circular path extends along the wind direction, and the path forms a mushroom-like shape due to the large drift.

\subsection{Periodic RSL/LSR Paths}\label{sec:rsl_lsr_periodic_path}
LSR/RSL-type trochoidal paths are paths where the start and end trochoid is in the opposite direction ($\delta_1 = -\delta_2$). This form of time optimal path does not have a closed-form solution and is solved numerically~\cite{techy2009minimum}. 

With the same parameterization as in~\refsec{sec:rsr_lsl_periodic_path}, we can find $t_A$ by finding the root of $f(t_A)=0$ in~\refequ{eq:lsr_numerical_solution}. The solution can be found numerically, through the Newton-Raphson method.
\begin{align}
    f(t_A) =& Ecos(\delta_1 \omega t_A + \phi_{t_1}) + F sin(\delta_1 \omega t_A + \phi_{t_1}) - G\\
    E =& V_a \left(V_w \frac{\delta_1 - \delta_2}{\delta_1\delta_2\omega} - (y_{t_{20}} - y_{t_{10}})\right)\nonumber\\
    F =& V_a \left((x_{t_{20}} - x_{t_{10}}) \right.\nonumber\\&\left.+ V_w \left(t_A\left(\frac{\delta_1}{\delta_2} -1\right) + \frac{\phi_{t_1} - \phi_{t_2} + 2k\pi}{\delta_2 \omega} \right) \right)\\
    G =& V_w (y_{t_{20}} - y_{t_{10}}) + \frac{V^2_a (\delta_2 - \delta_1)}{\delta_1\delta_2\omega}
    \label{eq:lsr_numerical_solution}
\end{align}
Once $t_A$ is found, $t_B$ is determined from~\refequ{eq:rsl_rb}.
\begin{align}
    t_B = \frac{\delta_1}{\delta_2}t_A + \frac{k}{\delta_2}t_{2\pi} + t_{2\pi} \quad k \in \{-3, -2, -1, 0, 1, 2\}\label{eq:rsl_rb}
\end{align}
\reffig{fig:lsr_periodic_path} shows the RSL/LSR-type periodic trochoid paths from a given state at the origin with different wind ratios. As the wind ratio increases, the path extends in the wind direction and forms a figure-eight shape path.

\begin{figure}[t]
\centerline{\includegraphics[width=\linewidth]{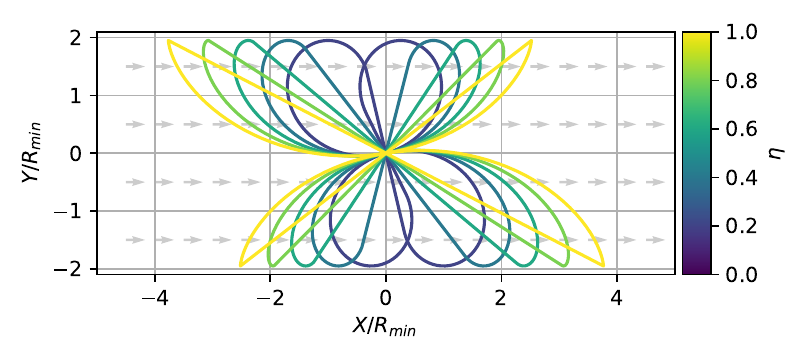}}
\caption{LSR/RSL periodic trochoidal paths with different wind ratios, tangent to the (arbitrary) initial state $\bm{x}_0 = (0.0, 0.0, 0.1\pi)$. Wind direction is marked as grey arrows.
}
\label{fig:lsr_periodic_path}
\end{figure}
\begin{figure}[t]
\centerline{\includegraphics[width=\linewidth]{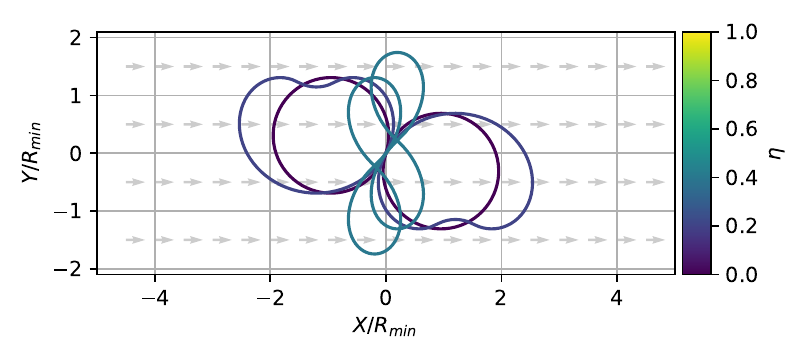}}
\caption{RLR/LRL-type periodic trochoid paths with different wind ratios, tangent to the (arbitrary) initial state $\bm{x}_0 = (0.0, 0.0, 0.4\pi)$. There are no feasible paths for higher wind ratios. The wind direction is marked as grey arrows.}
\label{fig:rlr_periodic_path}
\end{figure}

\subsection{Periodic RLR/LRL Paths}\label{sec:rlr_lrl_periodic_path}
RLR/LRL-type periodic paths consist of three arcs, where the first and last arc segments are in the same direction~($\delta_1=\delta_3$), with the middle arc segment in the opposite direction~($\delta_1=-\delta_3$). Therefore, the periodic path consists of two trochoidal paths in different directions. This path type also needs to be solved numerically~\cite{techy2009minimum}. The numerical solution to this path type is omitted for brevity.


\reffig{fig:rlr_periodic_path} shows the RLR/LRL periodic paths from a given state at the origin with different wind ratios. It can be seen that the shape of the path switches from a mushroom-like shape to a figure-eight path at higher wind ratios. We denote this path shape as figure-eight paths. Also, only paths at lower wind ratios are shown as a feasible solution for higher wind ratios does not exist. 

\section{Minimum-Extent Wind-Invariant Safe Set}
In \refsec{sec:periodic_trochoid_paths}, we have shown the different periodic trochoidal paths that exist from a given initial state $\bm{x}_0$. In this section, we introduce the problem of finding the minimum-extent set, which contains at least one feasible trochoidal periodic path for all wind conditions $\eta \in [0, 1)$. We define this set as the \emph{minimum-extent wind-invariant safe set}. 

\begin{definition}[Radius of Extent] \label{prop:max_safe_set}
Given a periodic path $\Gamma(\eta, \psi_w)$ defined by wind condition $(\eta, \psi_w$), and two arbitrary states $x_i, x_j$ are on the path, we define the radius of the extent as the following.
\begin{align}
    D = \frac{1}{2}\max_{{\forall x_i, x_j \in \Gamma}} \|x_i - x_j\|
\end{align}
\end{definition}

As the radius of the extent is only defined by the shape of the periodic path, a circular region with radius $D$ would be able to contain the periodic path for all wind directions $\psi_w$. Therefore, $D$ is a function of the wind ratio $\eta$.

\begin{definition}[Minimum-Extent Wind-Invariant Safe Set] \label{prop:problem_feasible} 
Given that a set $\mathcal{S}\subset\mathbb{R}^2$ defined as the area containing at least one periodic trochoidal path for all wind conditions $\eta \in [0, 1)$, $\psi_w \in [0, 2\pi)$ exists, we define the \emph{minimum-extent wind-invariant safe set} as the set with the smallest maximum radius of extent.
\begin{align}
    D_{min}=\max_{\eta \in [0, 1)} D(\eta)\nonumber
\end{align}
\end{definition}

In summary, at least one trochoidal periodic path exists in the minimum-extent wind-invariant safe set for all wind ratio $\eta=[0, 1)$ and wind direction $\psi=[0, 2\pi)$.

\subsection{RSR/LSL Wind Invariant Set}
\begin{figure}[t]
\centerline{\includegraphics[width=\linewidth]{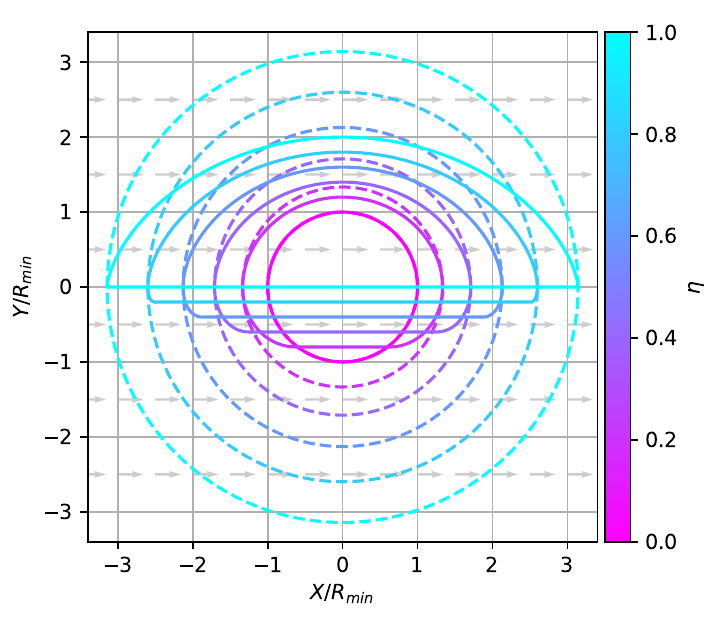}}
\caption{Evolution of RSR/LSL periodic trochoidal paths with different wind ratios. The radius of maximum extent for the minimum-extent wind-invariant set is $\pi R$.}
\label{fig:rsr_wind_invariant_set}
\end{figure}

Given that RSR/LSL-type trochoidal periodic paths consist of one continuous arc segment and a line segment connecting the arc, the radius of extent can be calculated by computing the distance between the extrema of the trochoid, that are, the minimum and maximum x-coordinates. We can consider a trochoid starting and ending at $\phi_{t_0} = \pi$. 
The extrema are the points where the x-direction velocity diminishes in~\refequ{eq:trochoid}.
\begin{align}
    \Dot{x} = V_a(\cos\phi + \eta) = 0
\end{align}


Calculating the distance between the extrema using~\refequ{eq:trochoid_path}, the radius for the minimum-extent wind-invariant safe set can be found~\refequ{eq:max_extent_rsr}.
\begin{align}
    D(\eta) = \frac{V_a}{\omega} \left(\sqrt{1-\eta^2} + \eta(\pi - \arccos{(-\eta)})\right)
    \label{eq:max_extent_rsr}
\end{align}

To find the respective minimum-extent wind-invariant safe set, we place the periodic path to be centered around the origin by placing the trochoid as $x_{t0} = 0$, $y_{t0} = -\eta V_a/\omega$ in~\refequ{eq:trochoid_path}. \reffig{fig:rsr_wind_invariant_set} shows the wind-invariant safe set with the center of the safe set at the origin. From~\refequ{eq:max_extent_rsr}, $V_w \approx 0 \rightarrow D \approx V_a/\omega$, and $V_w \approx V_a \rightarrow D \approx \pi V_a/\omega$. Note that the path stays in its ``mushroom" like form, and the required radius monotonically increases as the wind ratio increases. Therefore, the radius of the minimum-extent wind-invariant set is $D_{min} = \pi R_{min}$. 

\subsection{RSL/LSR Wind Invariant Sets}
We have shown in \refsec{sec:rsl_lsr_periodic_path} that it is possible to find an RSL/LSR-type trochoidal periodic path. In this section, we show that RSL/LSR-type periodic trochoidal paths with minimum extent have a straight segment that is zero length, which becomes equivalent to the minimum extent of RLR/LRL-type trochoid paths. We define this special case of trochoidal periodic path as a \emph{figure-eight path}.

\begin{figure}[b]
\centerline{\includegraphics[width=0.5\linewidth]{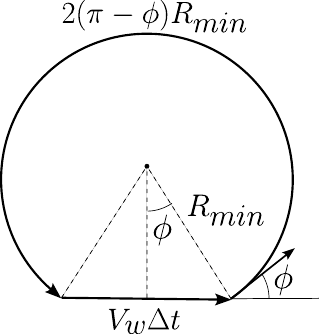}}
\caption{Diagram for the condition for the figure eight path.}
\label{fig:diagram_figure_eight}
\end{figure}

\begin{proposition}[Existence of Figure Eight] \label{prop:problem_feasible} Given a wind condition $\eta$, $\psi_w$, there exists an initial heading $\Bar{\phi} = \Bar{\psi}_0 - \psi_w$ for RSL/LSR-type trochoidal periodic paths, in which the length of the straight segment is zero.
\end{proposition}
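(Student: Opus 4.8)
The plan is to construct such a path explicitly by exploiting the mirror symmetry of the trochoid kinematics about the wind direction, which reduces the claim to a one-dimensional root-finding problem that can be settled by the intermediate value theorem.

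First I would observe that reflecting across the wind axis — in the trochoid frame, the map $(x_t, y_t, \phi_t)\mapsto(x_t, -y_t, -\phi_t)$ together with the turn-direction flip $\delta\mapsto-\delta$ — leaves the trochoid kinematics~\refequ{eq:trochoid}, and hence the closed form~\refequ{eq:trochoid_path}, invariant. I would therefore look for a periodic path consisting of two trochoidal arcs of opposite turn direction that are mirror images of one another and meet at a single point on the wind axis, taken to be $\bm{x}_0$ with heading offset $\bar\phi$. Concretely, the first arc turns through a turning angle $2\pi + 2\bar\phi$ (for $\bar\phi\in(-\pi,0)$, the smallest positive turn bringing the heading offset from $\bar\phi$ to $-\bar\phi$) and returns to the same position, now with heading offset $-\bar\phi$; the second arc is its reflection, which returns to $\bm{x}_0$. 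By construction there is no straight segment and the heading closes automatically, and — the payoff of the symmetry — the $y$-closure of each arc is automatic as well, since in~\refequ{eq:trochoid_path} the net $y$-displacement of an arc involves only $\cos$ of its (mirror-symmetric) initial and final heading offsets.

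It then remains to impose the $x$-closure of one arc. Evaluating~\refequ{eq:trochoid_path} over the arc, the oscillatory term contributes $\sin\bar\phi$ and the wind drift contributes a term linear in the arc duration, so the condition collapses to a scalar equation of the form $\sin\bar\phi + \eta(\pi + \bar\phi) = 0$. I would close the argument by continuity in $\bar\phi$: for $\eta=0$ this has the non-degenerate root $\bar\phi=0$ (two tangent circles of radius $R_{min}$), and for $\eta\in(0,1)$ the function $g(\bar\phi)=\sin\bar\phi+\eta(\pi+\bar\phi)$ satisfies $g(0)=\eta\pi>0$ while $g(-\pi)=0$ with $g'(-\pi)=\eta-1<0$, so $g$ is negative just to the right of $-\pi$ and, by the intermediate value theorem, has a root $\bar\phi^{\star}\in(-\pi,0)$; since then $2\pi+2\bar\phi^{\star}\in(0,2\pi)$, the two arcs are non-degenerate and we obtain a genuine figure-eight periodic path with zero straight segment for every admissible wind ratio. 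The step I expect to be delicate is precisely this non-degeneracy check: the $x$-closure equation always carries the spurious root $\bar\phi=-\pi$ (turning angle zero, the path collapses), so the argument must produce a root strictly inside $(-\pi,0)$ — which the sign of $g'(-\pi)$ together with $g(0)>0$ supplies — and one must keep careful track of the turn-direction labels so that this two-arc path is correctly recognized as the zero-straight-segment limit of the RSL/LSR family of~\refsec{sec:rsl_lsr_periodic_path}.
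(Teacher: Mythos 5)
Your proposal is correct and follows essentially the same route as the paper: both reduce the figure-eight condition to the scalar equation $\eta(\pi-\bar\phi)-\sin\bar\phi=0$ (your $g$ is exactly this under the substitution $\bar\phi\mapsto-\bar\phi$) and settle existence by the intermediate value theorem using the endpoint value $\eta\pi>0$ together with the sign of the derivative at the degenerate root. Your explicit mirror-symmetry justification of the two-arc closure and your handling of the spurious root at the endpoint are more careful than the paper's diagram-based argument, but they do not change the approach.
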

\begin{proof}
Assume that some heading $\Bar{\phi}$ creates a figure-eight trochoidal periodic path. In the air-relative fame, a figure-eight path is formed when the time to traverse the arc segment $t_A$ is identical to the drift caused by the wind~(\reffig{fig:diagram_figure_eight}).
\begin{align}
    t_A = \frac{R_{min}(2\pi-2\Bar{\phi})}{V_a} = \frac{2R_{min}sin\Bar{\phi}}{V_w}\quad R_{min} = \frac{V_a}{\omega}\label{eq:condition_figure_eight}
\end{align}

By rearranging \refequ{eq:condition_figure_eight}, we get~\refequ{eq:root_finding_figure_eight}, where $\Bar{\phi}$ is the solution if exists.
\begin{align}
    f(\Bar{\phi}) = \eta (\pi - \Bar{\phi}) - \sin{\Bar{\phi}} = 0\label{eq:root_finding_figure_eight}
\end{align}

To determine whether a solution exists, $f(0) = \eta \pi > 0$, and $f(\pi) = 0$, and $\left.\frac{\partial f(\psi)}{\partial \psi}\right\rvert_{\psi=\pi} > 0$.
Therefore, with the intermediate value theorem, we can conclude that there should be at least one $\Bar{\phi}$ that creates a figure-eight path.
\end{proof}

Through empirical evaluation, we observed the figure-eight path type to be the path of minimum extent for RSL/LSR-type periodic paths. Therefore, we only need to consider the figure-eight path to find the radius of the minimum-extent wind-invariant set. 


For the figure-eight paths, we can find the radius of the figure-eight path by revisiting \reffig{fig:diagram_figure_eight}. Note that $\Bar{\phi}$ does not have a closed-form solution and therefore needs to be found numerically.

\begin{align}
    D =& \frac{V_a}{\omega}\left(1 + \cos\Bar{\phi} \right)\label{eq:extent_figure_eight}\\
    & s.t. \quad \eta (\pi - \Bar{\phi}) - \sin{\Bar{\phi}} = 0\nonumber
\end{align}

\begin{figure}[t]
\centerline{\includegraphics[width=\linewidth]{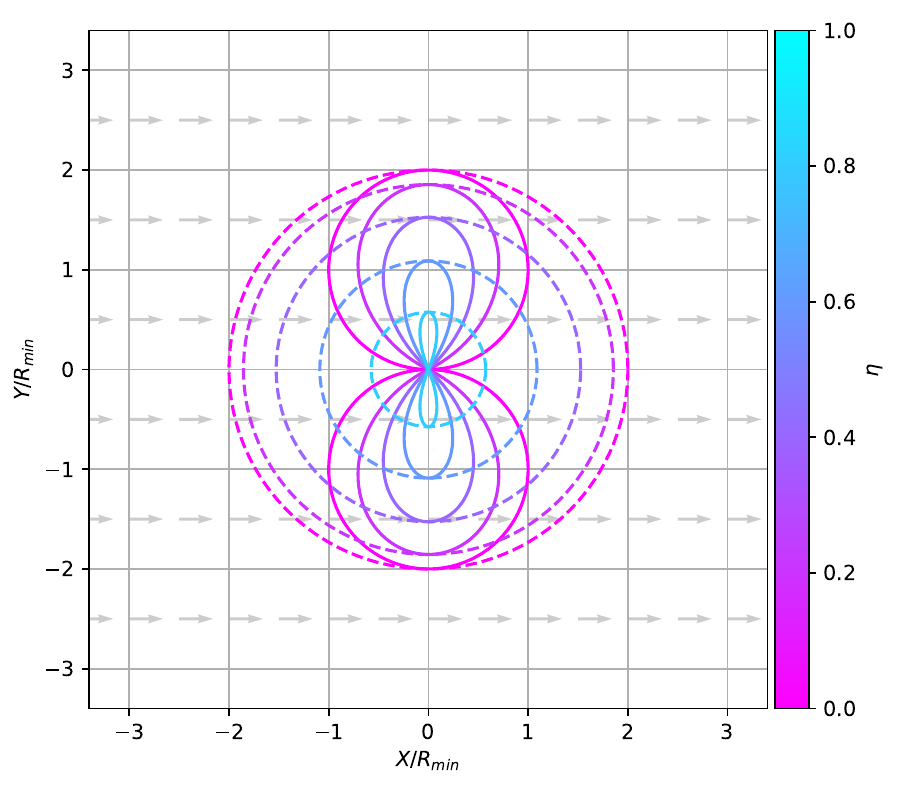}}
\caption{Wind invariant set in RLR/LRL periodic trochoid paths. The maximum extent decreases as wind ratio increases.}
\label{fig:lsr_wind_invariant_set}
\end{figure}

\reffig{fig:lsr_wind_invariant_set} shows the evolution of the figure-eight paths for different winds. As $V_w \approx 0$, $\Bar{\phi} \approx 0$ and $D \approx 2R_{min}$, and as $V_w \approx 0$, $\Bar{\phi} \approx \pi$. At $\eta \approx 0$, the figure eight becomes two tangential circles, and at $\eta\approx1$, the figure-eight path becomes a point where the vehicle can be stationary as the wind speed matches the vehicle's airspeed. Note that in contrast to RSR/LSL periodic trochoid paths, the maximum extent of LSR/RSL-type periodic trochoid paths decreases with the wind ratio increasing. Therefore, the radius of the minimum-extent wind-invariant RSL/LSR-type paths are $D_{min} = 2R_{min}$.

\subsection{RLR/LRL Wind Invariant Sets}
From the \refsec{sec:rlr_lrl_periodic_path}, we have shown that the path with minimum extent of the RLR/LRL-type periodic trochoidal path is in the form of a figure eight or a curved mushroom shape. For the curved mushroom-shape RLR/LRL-type paths, the radius of the wind invariant safe set is identical to the RSR/LSL-type trochoidal periodic path, as the long arc segment defines the extent. For the figure-eight RLR/LRL-type paths, the radius of the wind invariant safe set would be identical to the RSL/LSR-type periodic paths. Therefore, the RLR/LRL-type paths are redundant to the calculation of the minimum-extent wind-invariant set.

\begin{figure*}[t]
\begin{subfigure}{0.33\textwidth}
    \includegraphics[width=\linewidth]{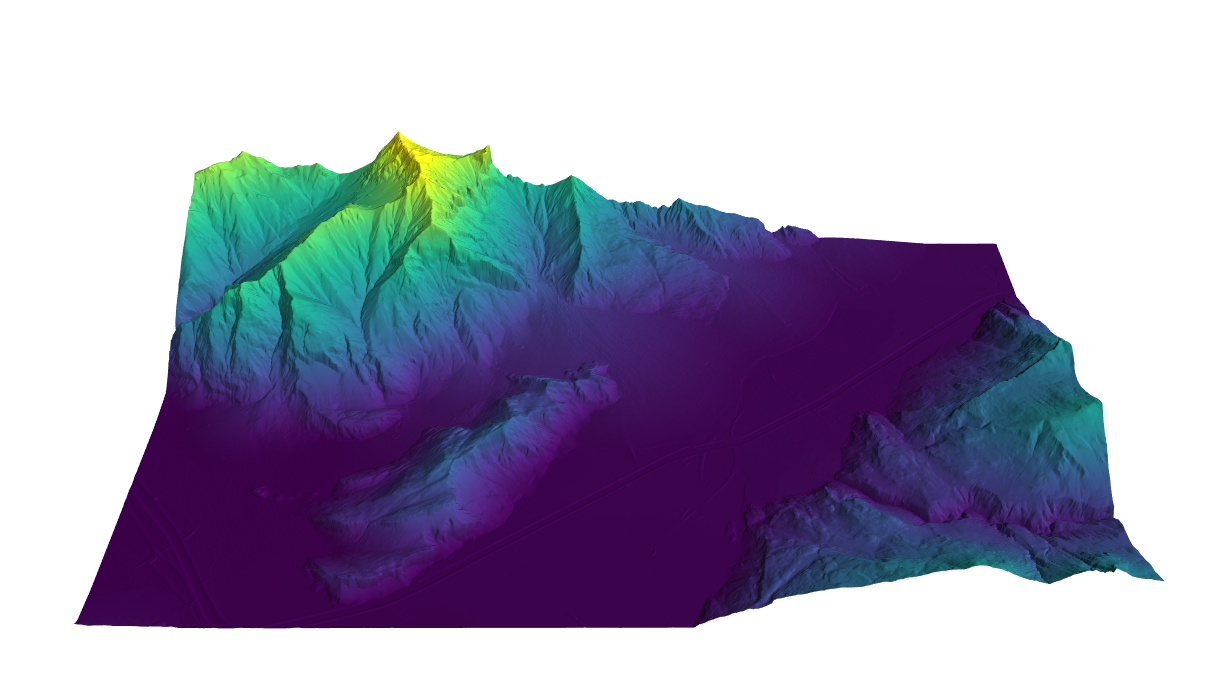}
    \includegraphics[width=\linewidth]{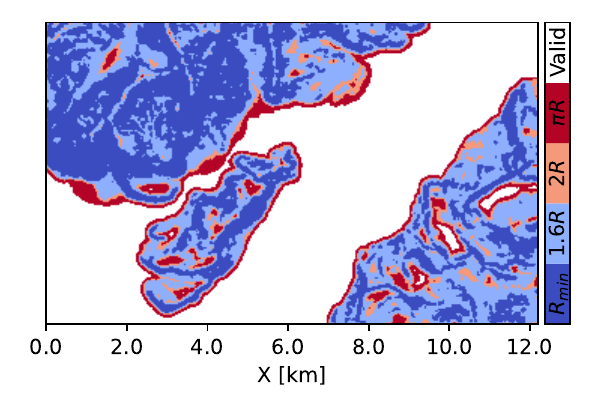}
    \caption{\emph{Sargans}}
    \label{fig:evaluation_terrain:a}
\end{subfigure}
\begin{subfigure}{0.33\textwidth}
    \includegraphics[width=\linewidth]{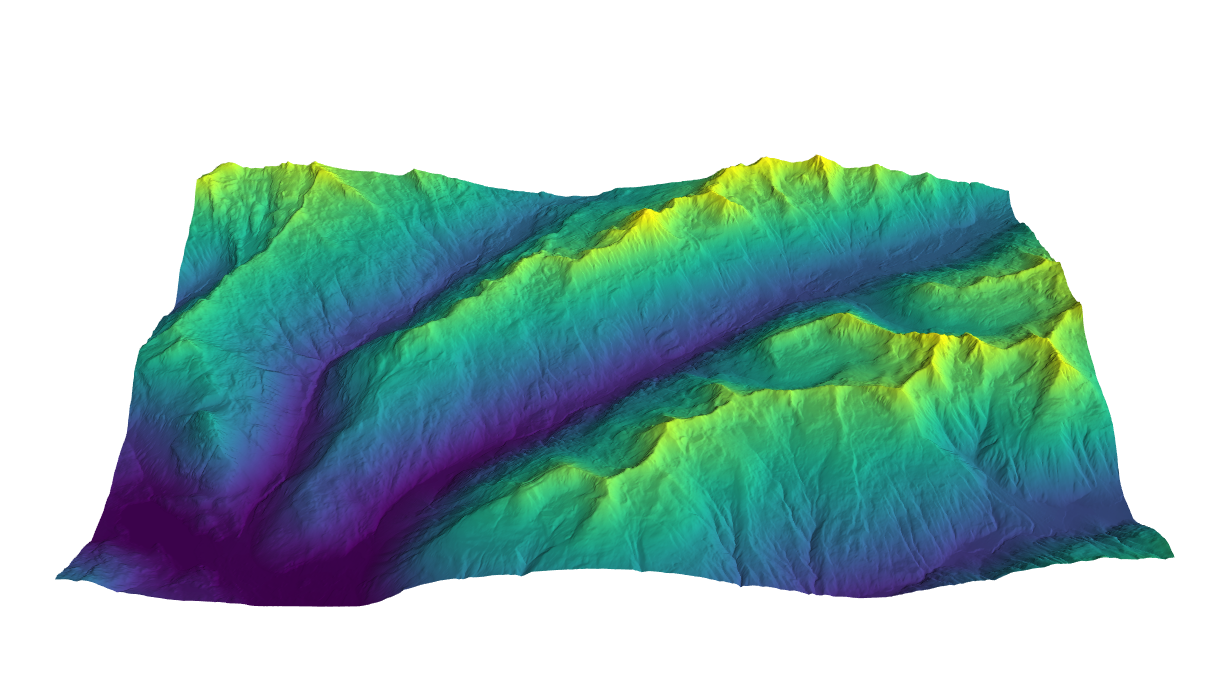}
    \includegraphics[width=\linewidth]{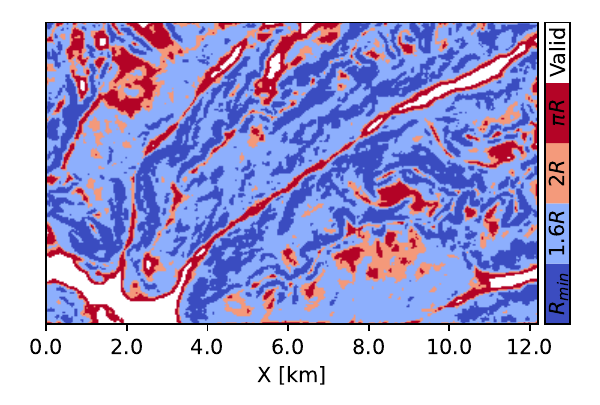}
    \caption{\emph{Dischma Valley}}\label{fig:evaluation_terrain:b} \par 
\end{subfigure}%
\begin{subfigure}{0.33\textwidth}
    \includegraphics[width=\linewidth]{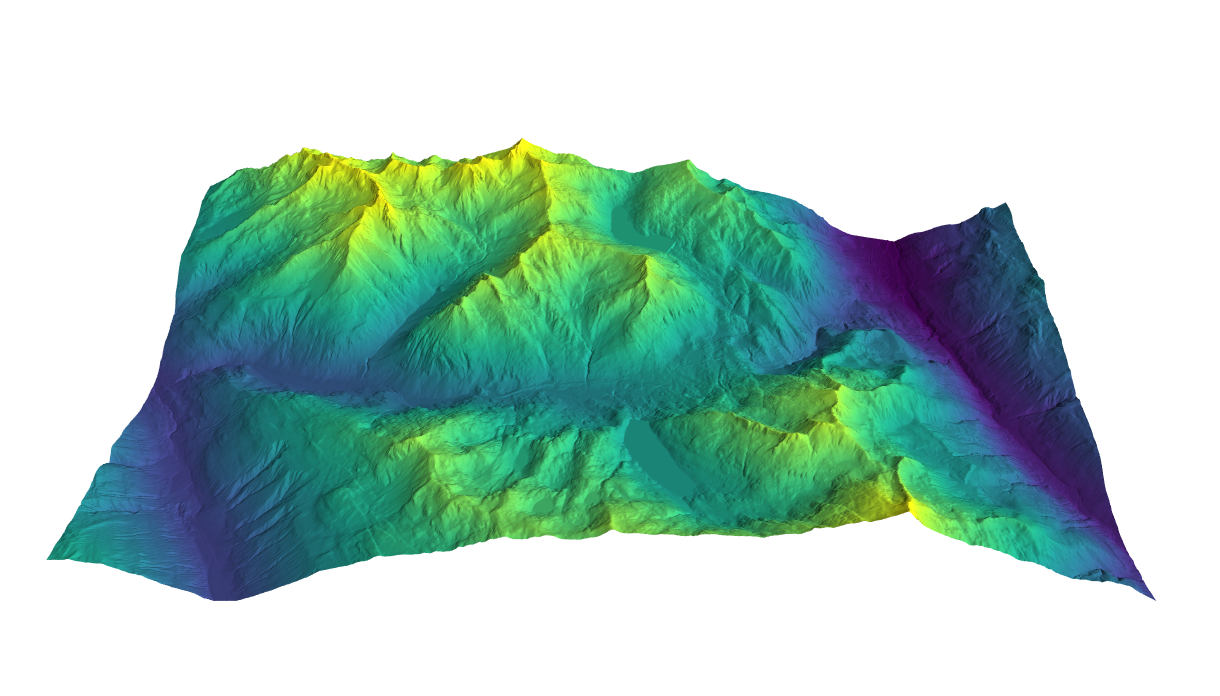}
    \includegraphics[width=\linewidth]{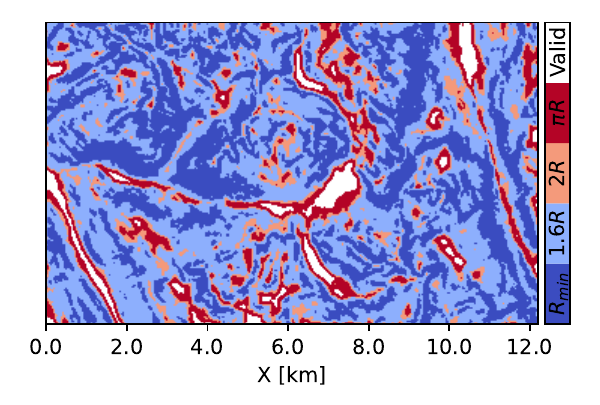}
    \caption{\emph{Gotthard Pass}}\label{fig:evaluation_terrain:c} \par
\end{subfigure}%
\caption{(above) 3D visualization of the elevation data colored with elevation. (below) Two-dimensional projection of the valid regions calculated from the radius of the minimum-extent wind-invariant set in three environments: (a) \emph{Sargans}, (b) \emph{Dischma Valley}, and (c) \emph{Gotthard Pass}. The invalid regions are marked as blue ($R_{min}$), light blue ($1.6R_{min}$, \emph{Ours}), light red ($2R_{min}$), and red ($\pi R_{min}$). The regions that are always valid are marked as white.}
\label{fig:evaluation_terrain}
\end{figure*}

\begin{figure}[t]
\centerline{\includegraphics[width=\linewidth]{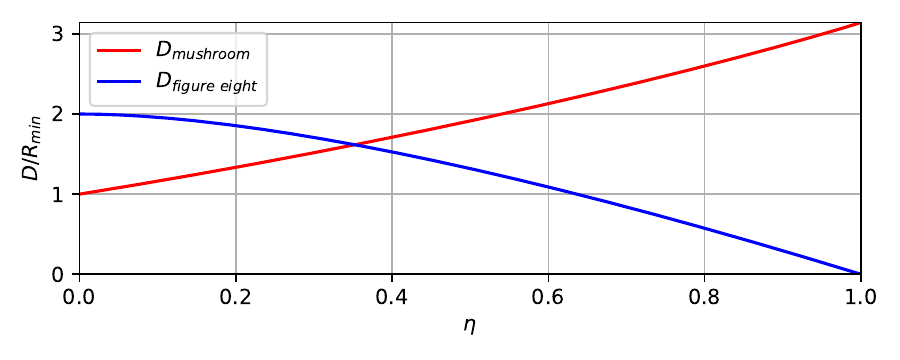}}
\caption{Evolution of minimum radius with increasing wind ratio for mushroom and figure-eight type paths. Extent of $\pi R_{min}$ is required for mushroom shape periodic trochoid paths, and extent of $2R_{min}$ for figure eight type trochoid paths to stay within the boundary.}
\label{fig:windratio_maximum_extent}
\end{figure}

\subsection{Minimum Wind Invariant Set}
We have found that for the RSR/LSL-type periodic trochoid paths, the radius for the minimum wind invariant set is $\pi R_{min}$, and for the RSL/LSR-type periodic trochoid paths it is $2R_{min}$. A key insight is that, with $\eta$ increasing from zero to one, the extent of RSR/LSL-type paths monotonically increases from $R_{min}$ to $\pi R_{min}$, and the maximum extent of the RSL/LSR-type paths monotonically decreases from $2R_{min}$ to $0$~(\reffig{fig:windratio_maximum_extent}). 
By considering a switching logic to select the path type with the smallest extent as function of the wind ratio, we can therefore minimize the extent of the wind invariant set.

The switching point is found numerically at around $\eta\approx0.35$. For $\eta<0.35$, RSR/LSL mushroom-type paths are selected, and above, RLR/LRL figure-eight type paths are to be used. At the switching point, the extent becomes maximal, attaining a value of $1.62R_{min}$.
Employing this switching yields a significant reduction of the wind invariant sets' extent (i.e. to $1.62R_{min}$) compared to using a single path type only. Namely, a reduction by \qty{48}{\percent} compared to the mushroom-type trochoid path and by \qty{19}{\percent} compared to figure-eight paths. The periodic path with minimum extent with the switching strategy is visualized in~\reffig{fig:combined_maximum_extent}. 

\section{Evaluation}
\subsection{Setup}
We evaluate the impact of our approach by evaluating the valid flyable regions with the tightened radius of the minimum-extent wind-invariant set. We use three environments, denoted as \emph{Sargans}, \emph{Dischma Valley}, \emph{Gotthard Pass} as in~\cite{lim2024safe}. \reffig{fig:evaluation_terrain} shows the selected \acp{DEM} from SwissAlti3D~\cite{swisstopo2023swissalti3d} with the extent of $12.2\si{km} \times 7.48\si{km}$ with \SI{10}{\metre} lateral resolution. 
The \ac{DEM} uses CH1903/LV03 for lateral coordinates and Bessel 1841 for the vertical datum. These environments are identical in extent but have varying rugged topographies, with the \emph{Gotthard Pass} environment being the most challenging by having the least available valid loiter regions due to the rugged terrain.

For non-flat terrain, the flyable regions are determined by whether the minimum-extent wind-invariant set exists, such that it fits within the distance constraints to the sloped terrain. The maximum distance constraints are defined as \SI{120}{\metre} according to the EU regulations~\cite{eu2019commission}, and the minimum as \SI{50}{\metre} to account for vegetation and unknown obstacles. The minimum (air-relative) turn radius of the vehicle is assumed to be $R_{min}=$\SI{66.67}{\metre}, representing the tilt-rotor VTOL platform in~\cite{lim2024safe}. 

We compare our approach with three baselines. First, we compare our approach to a conservative Dubins strategy~\cite{schopferer2018path, wolek2015feasible}, where a conservative turn radius accounts for the wind. Since we consider wind conditions of $\eta=[0, 1)$, a conservative turn curvature would be to decrease the curvature by half ($2R_{min})$). This is equivalent to the minimum-extent wind-invariant set of figure-eight path. Second, we compare our method to the radius required by the mushroom-type trochoidal paths with a radius of $\pi R_{min}$. Lastly, we compare how much more radius is required compared to the naive, zero-wind \emph{valid loiter regions} with minimum turn radius~\cite{lim2024safe}.

\subsection{Results}
\reffig{fig:evaluation_terrain} shows the different invalid regions for different radii of different approaches. The invalid regions with a smaller radius are a subset of the invalid regions with a larger radius. Having a smaller region that is marked as invalid means that more locations are available for the fixed-wing aerial vehicle for safe station-keeping in uncertain wind.

As expected, steeper regions are classified as invalid regions. In the \emph{Dischma Valley} environment, it can be seen that a lot of isolated regions become invalid when requiring a radius larger than the proposed $1.6R_{min}$. This means that there are less reachable spaces that the vehicle can reach as a target position. Moreover, valid regions can be used as rally points, which are locations that the vehicle can reach and hold in case of aborting a mission. Note that in the \emph{Gotthard Pass} environment, there is very little valid space left without using the proposed strategy (i.e. single-path invariant sets, light red, red). This means that the vehicle would have very limited options for station keeping, resulting in huge detours in case a mission needs to be paused or aborted. 

\begin{table}[t]
    \centering
    \caption{Coverage of valid regions in the three environments with different radius.}
    \begin{tabular}{c |c| c c c}
        \toprule
        Environment & $R_{min}$\cite{lim2024safe} & $2R_{min}$ \cite{schopferer2018path} & $\pi R_{min}$ &  \emph{Ours}\\
        \midrule
        \emph{Sargans} & 0.7701 & 0.4570 & 0.3942 & \textbf{0.5102}\\
        \emph{Dischma Valley} & 0.7910 & 0.1385 & 0.0353 & \textbf{0.2718}\\
        \emph{Gotthard Pass} & 0.7317 & 0.1187 & 0.0240 & \textbf{0.2259}\\
        \bottomrule
    \end{tabular}
    \label{tab:valid_regions}
\end{table}

We quantify the coverage of valid regions to evaluate how much of the space is reachable (\reftab{tab:valid_regions}). It can be seen that the proposed approach classifies valid regions of over \qty{20}{\percent} for all environments. However, the valid regions for the feasible Dubins ($2R_{min}$) and mushroom-type paths ($\pi R_{min}$) are significantly lower.

\section{Conclusions}
In this work, we defined the minimum-extent wind-invariant safe set, where at least one periodic trochoidal path exists for all wind conditions.
We show that a tighter radius ($1.62R_{min}$) is achievable using a switching strategy, as compared to naively reducing the curvature or considering a single trochoidal path type only.
It was shown that the tighter radius results in significantly more reachable space when evaluated with real terrain data from mountainous environments. 

We expect our approach to enable the operations of autonomous fixed-wing systems in more challenging wind conditions.
The safe regions that are identified with the wind-invariant set can be used to determining safe terminal paths when planning in wind.
However, planning a path that can reach the safe periodic path while being robust against wind uncertainty is still an open question.
Further, as the periodic trochoidal paths are only feasible for a specific wind condition, the deviation of the vehicle following the path under varying wind conditions would need to be accounted to ensure that the vehicle stays within the distance constraints.





\bibliographystyle{IEEEtran}
\balance
\bibliography{references}

\end{document}